\definecolor{myred}{rgb}{0.8,0,0}
\definecolor{mygreen}{rgb}{0,0.6,0}
\definecolor{myblue}{rgb}{0,0,0.7}
\theoremstyle{definition}
\newtheorem{theorem}{Theorem}
\newtheorem{definition}{Definition}
\newcommand*{\myproofname}{Justification}
\newcommand{\sv}{{\sc 1D-toy}\xspace}
\newcommand{\rc}{{\sc Reacher-v2}\xspace}
\newcommand{\hc}{{\sc HalfCheetah-v2}\xspace}
\newcommand{\drift}{{\sc Drift}\xspace}
\newcommand{\ddpg}{{\sc ddpg}\xspace}
\newcommand{\ddpgarg}{{\sc ddpg}-argmax\xspace}
\newcommand{\dqn}{{\sc dqn}\xspace}
\newcommand{\ql}{{\sc q-learning}\xspace}
\newcommand{\tddd}{{\sc td3}\xspace}
\newcommand{\rl}{{\sc rl}\xspace}
\newcommand{\sac}{{\sc sac}\xspace}
\newcommand{\cacla}{{\sc cacla}\xspace}
\DeclareMathOperator*{\argmax}{\arg\!\max}
\begin{document}

\title{The problem with DDPG: understanding failures in deterministic environments with sparse rewards}

\author{Guillaume Matheron, Nicolas Perrin, Olivier Sigaud\\
Sorbonne Université, CNRS,\\
Institut des Systèmes Intelligents et de Robotique,  Paris, France\\
{\tt guillaume\_pub@matheron.eu}
}

\maketitle





\begin{abstract}
In environments with continuous state and action spaces, state-of-the-art actor-critic reinforcement learning algorithms can solve very complex problems, yet can also fail in environments that seem trivial, but the reason for such failures is still poorly understood.
In this paper, we contribute a formal explanation of these failures in the particular case of sparse reward and deterministic environments.
First, using a very elementary control problem, we illustrate that the learning process can get stuck into a fixed point corresponding to a poor solution.
Then, generalizing from the studied example, we provide a detailed analysis of the underlying mechanisms which results in a new understanding of one of the convergence regimes of these algorithms.
The resulting perspective casts a new light on already existing solutions to the issues we have highlighted, and suggests other potential approaches.
\end{abstract}


\section{Introduction}

The Deep Deterministic Policy Gradient (\ddpg) algorithm (\cite{lillicrap2015continuous}) is one of the earliest deep Reinforcement Learning (\rl) algorithms designed to operate on potentially large continuous state and action spaces with a deterministic policy, and it is still one of the most widely used.
However, it is often reported that \ddpg suffers from instability in the form of sensitivity to hyper-parameters and propensity to converge to very poor solutions or even diverge.
Various algorithms have improved stability by addressing well identified issues, such as the over-estimation bias in \tddd \citep{fujimoto2018addressing} but, because a fundamental understanding of the phenomena underlying these instabilities is still missing, it is unclear whether these ad hoc remedies truly address the source of the problem.
Thus, better understanding why these algorithms can fail even in very simple environments is a pressing question.

To investigate this question, we introduce in Section~\ref{sec:sv} a very simple one-dimensional environment with a sparse reward function where \ddpg sometimes fails.
Analyzing this example allows us to provide a detailed account of these failures.
We then reveal the existence of a cycle of mechanisms operating in the sparse reward and deterministic case, leading to the quick convergence to a poor policy.
In particular, we show that, when the reward is not discovered early enough, these mechanisms can lead to a \emph{deadlock} situation where neither the actor nor the critic can evolve anymore.
Critically, this deadlock persists even when the agent is subsequently trained with rewarded samples.

The study of these mechanisms is backed-up with formal proofs in a simplified context where the effects of function approximation is ignored.
Nevertheless, the resulting understanding helps analyzing the practical phenomena encountered when using actors and critics represented as neural networks.
From this new light, we revisit in Section~\ref{sec:solutions} a few existing algorithms whose components provide an alternative to the building blocks involved in the undesirable cyclic convergence process, and we suggest alternative solutions to these issues.


\section{Related work}
 \label{sec:related}

Issues when combining \rl with function approximation have been studied for a long time \citep{baird1993reinforcement,boyan1995generalization,tsitsiklis1997analysis}.
In particular, it is well known that deep \rl algorithms can diverge when they meet three conditions coined as the "deadly triad" \citep{sutton2018reinforcement}, that is when they use (1) function approximation, (2) bootstrapping updates and (3) off-policy learning. However, these questions are mostly studied in the continuous state, discrete action case.
For instance, several recent papers have studied the mechanism of this instability using \dqn \citep{mnih2013playing}.
In this context, four failure modes have been identified from a theoretical point of view by considering the effect of a linear approximation of the deep-Q updates and by identifying conditions under which the approximate updates of the critic are contraction maps for some distance over Q-functions \citep{achiam2019towards}.
Meanwhile, \cite{vanhasselt2018deep} shows that, due to its stabilizing heuristics, \dqn does not diverge much in practice when applied to the {\sc atari} domain.

In contrast to these papers, here we study a failure mode specific to continuous action actor-critic algorithms. It hinges on the fact that one cannot take the maximum over actions, and must rely on the actor as a proxy for providing the optimal action instead. Therefore, the failure mode identified in this paper cannot be reduced to any of the ones that affect \dqn. Besides, the theoretical derivations provided in the appendices show that the failure mode we are investigating does not depend on function approximation errors, thus it cannot be directly related to the deadly triad.

More related to our work, several papers have studied failure to gather rewarded experience from the environment due to poor exploration \citep{colas2018gep-pg:,fortunato2017noisy,plappert2017parameter}, but we go beyond this issue by studying a case where the reward is actually found but not properly exploited.
Finally, like us the authors of \cite{fujimoto2018off-policy} study a failure mode which is specific to \ddpg-like algorithms, but the studied failure mode is different. They show under a batch learning regime that \ddpg suffers from an {\em extrapolation error} phenomenon, whereas we are in the more standard incremental learning setting and focus on a deadlock resulting from the shape of the Q-function in the sparse reward case.



\section{Background: Deep Deterministic Policy Gradient}
\label{sec:ddpg}

The \ddpg algorithm \citep{lillicrap2015continuous} is a deep \rl algorithm based on the Deterministic Policy Gradient theorem \citep{silver2014deterministic}. It borrows the use of a replay buffer and target networks from \dqn \citep{mnih2015human-level}.
\ddpg is an instance of the Actor-Critic model. It learns both an actor function $\pi_\psi$ (also called policy) and a critic function $Q_\theta$, represented as neural networks whose parameters are respectively noted $\psi$ and $\theta$.

The deterministic actor takes a state $s \in S$ as input and outputs an action $a \in A$. The critic maps each state-action pair $(s,a)$ to a value in $\mathbb{R}$. The reward $r: S \times A \rightarrow \mathbb{R}$, the termination function $t: S \times A \rightarrow \{0,1\}$ and the discount factor $\gamma < 1$ are also specified as part of the environment.

The actor and critic are updated using stochastic gradient descent on two losses $L_{\psi}$ and $L_{\theta}$. These losses are computed from mini-batches of samples $(s_i, a_i, r_i, t_i, s_{i+1})$, where each sample corresponds to a transition $s_i \rightarrow s_{i+1}$ resulting from performing action $a_i$ in state $s_i$, with subsequent reward $r_i = r(s_i, a_i)$ and termination index $t_i = t(s_i, a_i)$.

Two target networks $\pi_{\psi'}$ and $Q_{\theta'}$ are also used in \ddpg.
Their parameters $\psi'$ and $\theta'$ respectively track $\psi$ and $\theta$ using exponential smoothing.
They are mostly useful to stabilize function approximation when learning the critic and actor networks. Since they do not play a significant role in the phenomena studied in this paper, we ignore them in the formal proofs given in appendices.

Equations \eqref{eq:ddpg_actor} and \eqref{eq:ddpg_critic} define $L_{\psi}$ and $L_{\theta}$: 
\begin{equation}
    \label{eq:ddpg_actor}
    L_{\psi} = - \sum_i Q_\theta\left( s_i, \pi_\psi\left(s_i\right)\right)
\end{equation}
\begin{equation}
    \label{eq:ddpg_critic}
    \left\{\begin{aligned}
    \forall i, y_i & = r_i + \gamma (1-t_i) Q_{\theta'}\left(s_{i+1}, \pi_{\psi'}\left(s_{i+1}\right)\right) \\
    L_{\theta} &= \sum_i \Big[ Q_\theta\left(s_i,a_i\right) - y_i \Big]^2\text{.}
    \end{aligned}\right.
\end{equation}

Training for the loss given in \eqref{eq:ddpg_actor} yields the parameter update in \eqref{eq:update_actor}, with $\alpha$ the learning rate:

\begin{equation}
    \psi \leftarrow \psi + \alpha \sum_i \frac{\partial \pi_\psi(s_i)}{\partial \psi}^T \left.\nabla_a Q_{\theta}(s_i, a)\right|_{a=\pi_\psi(s_i)}.
    \label{eq:update_actor}
\end{equation}

As \ddpg uses a replay buffer, the mini-batch samples are acquired using a behaviour policy $\beta$ which may be different from the actor $\pi$. Usually, $\beta$ is defined as $\pi$ plus a noise distribution, which in the case of \ddpg is either a Gaussian function or the more sophisticated Ornstein-Uhlenbeck noise.

Importantly for this paper, the behaviour of \ddpg can be characterized as an intermediate between two extreme regimes:
\begin{itemize}
\item
When the actor is updated much faster than the critic, the policy becomes greedy with respect to this critic, resulting into a behaviour closely resembling that of the \ql algorithm. When it is close to this regime, \ddpg can be characterized as off-policy.
\item
When the critic is updated much faster than the actor, the critic tends towards $Q^\pi(s, a)$. The problems studied in this paper directly come from this second regime.
\end{itemize}

A more detailed characterization of these two regimes in given in Appendix~\ref{sec:regimes}.


\section{A new failure mode}
\label{sec:sv}

In this section, we introduce a simplistic environment which we call \sv. It is a one-dimensional, discrete-time, continuous state and action problem, depicted in \figurename~\ref{fig:simple_v0}.

\begin{figure}[hbtp]
    \centering
    \begin{subfigure}{0.49 \textwidth}
        \includegraphics[width=\linewidth]{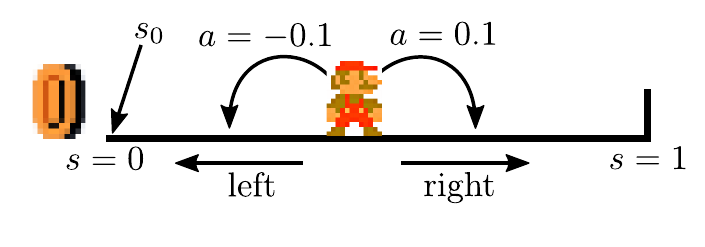}
    \end{subfigure}
    \begin{subfigure}{0.49 \textwidth}
      \begin{subequations}
      \begin{align}
        S &= [0, 1] \\
        A &= [-0.1, 0.1]\\
        s_0 &= 0\\
        s_{t+1} &= \min\left(1, \max\left(0, s_t + a_t\right)\right) \\
        r_{t} &= t_{t} = \mathbb{1}_{s_t + a_t < 0}
      \end{align}
      \end{subequations}
       \end{subfigure}
    \caption{The \sv environment}
    \label{fig:simple_v0}
\end{figure}


Despite its simplicity, \ddpg can fail on \sv. We first show that \ddpg fails to reach $100\%$ success. We then show that if learning a policy does not succeed soon enough, the learning process can get stuck. Besides, we show that the initial actor can be significantly modified in the initial stages before finding the first reward. We explain how the combination of these phenomena can result into a deadlock situation. We generalize this explanation to any deterministic and sparse reward environment by revealing and formally studying a undesirable cyclic process which arises in such cases. Finally, we explore the consequences of getting into this cyclic process.

\subsection{Empirical study}
\label{subsec:results}

In all experiments, we set the maximum episode length $N$ to $50$, but the observed phenomena persist with other values.

\paragraph{Residual failure to converge using different noise processes}
We start by running \ddpg on the \sv environment. This environment is trivial as one infinitesimal step to the left is enough to obtain the reward, end the episode and succeed, thus we might expect a quick $100\%$ success. 
However, the first attempt using an Ornstein-Uhlenbeck (OU) noise process shows that \ddpg succeeds in only 94\% of cases, see \figurename~\ref{fig:trivial_results}.

\begin{figure}[hbtp]
  \center
  \begin{subfigure}{0.49 \linewidth}
      \includegraphics[width=\linewidth]{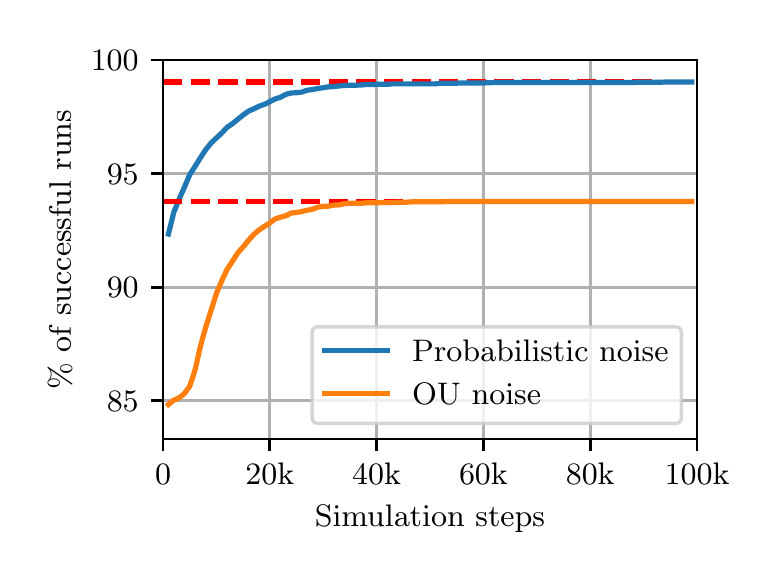}
      \caption{Success rate of \ddpg with Ornstein-Uhlenbeck (OU) and probabilistic noise. Even with probabilistic noise, \ddpg fails on about 1\% of the seeds.}
      \label{fig:trivial_results}
  \end{subfigure}
  \,
  \begin{subfigure}{0.49 \linewidth}
    \center
    \includegraphics[width=\textwidth]{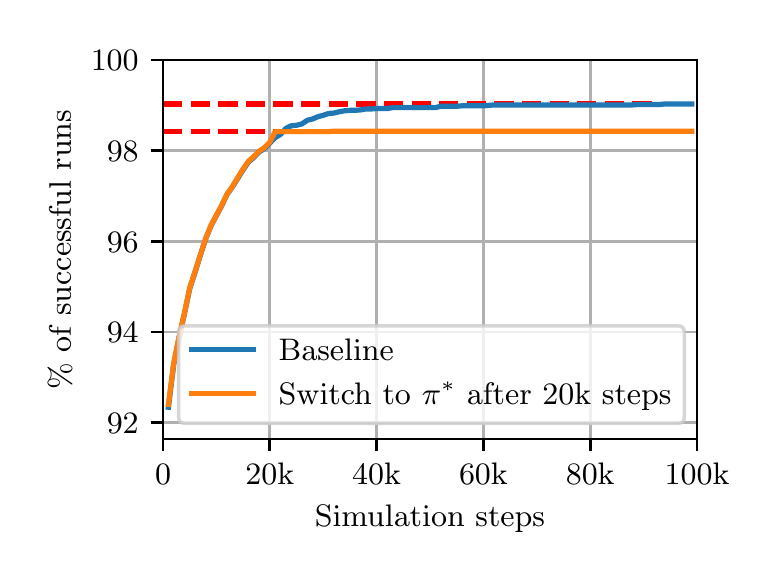}
    \caption{Comparison between \ddpg with probabilistic noise and a variant in which the behaviour policy is set to the optimal policy $\pi^*$ after 20k steps.}
    \label{fig:ddpg_fake}
  \end{subfigure}
  \caption{Success rate of variants of \ddpg on \sv over learning steps, averaged over 10k seeds. More details on learning algorithm and success evaluation are given in Appendix~\ref{sec:implementation}.}
\end{figure}

These failures might come from an exploration problem. Indeed, at the start of each episode the OU noise process is reset to zero and gives little noise in the first steps of the episode. In order to remove this potential source of failure, we replace the OU noise process with an exploration strategy similar to $\epsilon$-greedy which we call "probabilistic noise". For some $0<p<1$, with probability $p$, the action is randomly sampled (and the actor is ignored), and with probability $1-p$ no noise is used and the raw action is returned. In our tests, we used $p=0.1$. This guarantees at least a 5\% chance of success at the first step of each episode, for any policy.
Nevertheless, \figurename~\ref{fig:trivial_results} shows that even with probabilistic noise, about $1\%$ of seeds still fail to converge to a successful policy in \sv, even after 100k training steps. All the following tests are performed using probabilistic noise.

We now focus on these failures. On all failing seeds, we observe that the actor has converged to a saturated policy that always goes to the right ($\forall s, \pi(s)=0.1$).
However, some mini-batch samples have non-zero rewards because the agent still occasionally moves to the left, due to the probabilistic noise applied during rollouts.
The expected fraction of non-zero rewards is slightly more than 0.1\%\footnote{10\% of steps are governed by probabilistic noise, of which at least 2\% are the first episode step, of which 50\% are steps going to the left and leading to the reward.}.
\figurename~\ref{fig:reward_spikes} shows the occurrence of rewards in minibatches taken from the replay buffer when training \ddpg on \sv.
After each rollout (episode) of $n$ steps, the critic and actor networks are trained $n$ times on minibatches of size 100. So for instance, a failed episode of size 50 is followed by a training on a total of 5000 samples, out of which we expect more than 5 in average are rewarded transitions. More details about the implementation are available in Appendix~\ref{sec:implementation}.

The constant presence of rewarded transitions in the minibatches suggests that the failures of \ddpg on this environment are not due to insufficient exploration by the behaviour policy.

\begin{figure}[hbtp]
    \centering
    \begin{subfigure}{0.49 \textwidth}
  \center
  \includegraphics{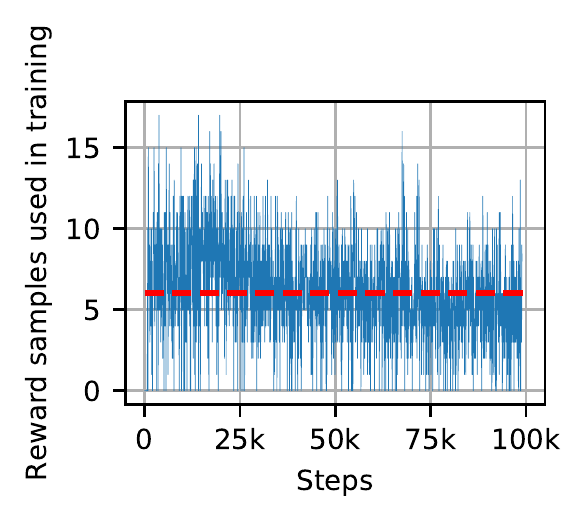}
  \caption{ }
  \label{fig:reward_spikes}
\end{subfigure}
\,
\begin{subfigure}{0.49 \textwidth}
\center
  \includegraphics{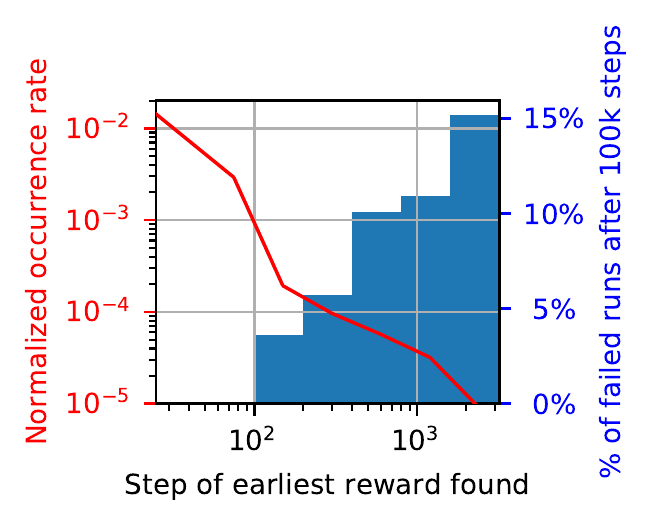}
  \caption{ }
  \label{fig:earlycorrelation}
\end{subfigure}
  \caption{(a) Number of rewards found in mini-batches during training. After a rollout of $n$ steps, the actor and critic are both trained on $n$ minibatches of size 100. The red dotted line indicates an average of 6.03 rewarded transitions present in these $n$ minibatches. (b) In red, normalized probability of finding the earliest reward at this step. In blue, for each earliest reward bin, fraction of these episodes that fail to converge to a good actor after 100k steps. Note that when the reward is found after one or two episodes, the convergence to a successful actor is certain.}
\end{figure}

\paragraph{Correlation between finding the reward early and finding the optimal policy}
We have shown that \ddpg can get stuck in \sv despite finding the reward regularly. Now we show that when \ddpg finds the reward early in the training session, it is also more successful in converging to the optimal policy. On the other hand, when the first reward is found late, the learning process more often gets stuck with a sub-optimal policy.

From \figurename~\ref{fig:earlycorrelation}, the early steps appear to have a high influence on whether the training will be successful or not.
For instance, if the reward is found in the first 50 steps by the actor noise (which happens in 63\% of cases), then the success rate of \ddpg is 100\%.
However, if the reward is first found after more than 50 steps, then the success rate drops to 96\%.
\figurename~\ref{fig:earlycorrelation} shows that finding the reward later results in lower success rates, down to 87\% for runs in which the reward was not found in the first 1600 steps. Therefore, we claim that there exists a critical time frame for finding the reward in the very early stages of training.

\paragraph{Spontaneous actor drift}

At the beginning of each training session, the actor and critic of \ddpg are initialized to represent respectively close-to-zero state-action values and close-to-zero actions. Besides, as long as the agent does not find a reward, it does not benefit from any utility gradient. Thus we might expect that the actor and critic remain constant until the first reward is found. Actually, we show that even in the absence of reward, training the actor and critic triggers non-negligible updates that cause the actor to reach a saturated state very quickly.

To investigate this, we use a variant of \sv called \drift where the only difference is that no rewarded or terminal transitions are present in the environment.
We also use a stripped-down version of \ddpg, removing rollouts and using random sampling of states and actions as minibatches for training.

\begin{figure}[hbtp]
    \centering
    \begin{subfigure}{0.49 \textwidth}
  \center
  \includegraphics[width=\linewidth]{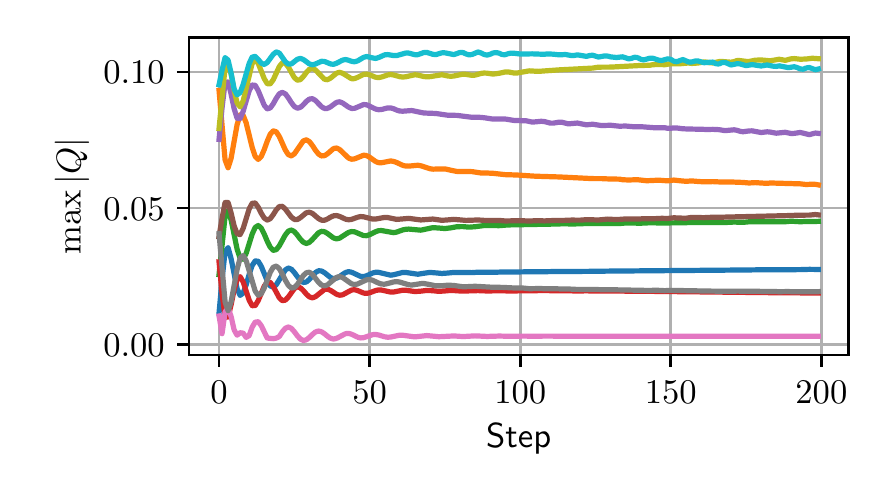}
  \caption{ }
  \label{fig:drift_results_q}
\end{subfigure}
\,
\begin{subfigure}{0.49 \textwidth}
\center
  \includegraphics[width=\linewidth]{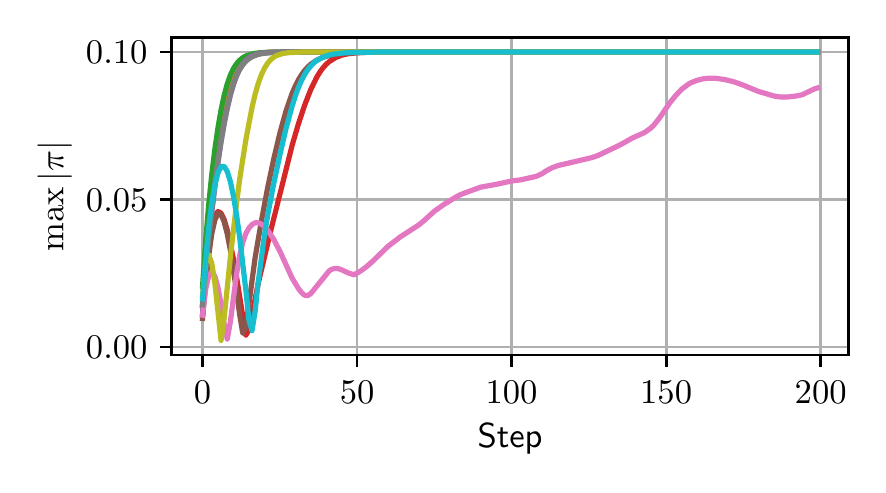}
  \caption{ }
  \label{fig:drift_results_pi}
\end{subfigure}
  \caption{Drift of $\max |Q|$ and $\max |\pi|$ in the \drift environment, for 10 different seeds. In the absence of reward, the critic oscillates briefly before stabilizing. However, the actor very quickly reaches a saturated state, at either $\forall s, \pi(s)=0.1$ or $-0.1$.}
\end{figure}

\figurename~\ref{fig:drift_results_pi} shows that even in the absence of reward, the actor function drifts rapidly (notice the horizontal scale in steps) to a saturated policy, in a number of steps comparable to the "critical time frame" identified above. The critic also has a transitive phase before stabilizing.

In \figurename~\ref{fig:drift_results_q}, the fact that $\max_{s,a} \left|Q(s,a)\right|$ can increase in the absence of reward can seem counter-intuitive, since in the loss function presented in Equation \eqref{eq:ddpg_critic}, $\left|y_i\right|$ can never be greater than $\max_{s,a} \left|Q(s,a)\right|$. However, it should be noted that the changes made to $Q$ are not local to the minibatch points, and increasing the value of $Q$ for one input $(s,a)$ may cause its value to increase for other inputs too, which may cause an increase in the global maximum of $Q$. This phenomenon is at the heart of the over-estimation bias when learning a critic \citep{fujimoto2018addressing}, but this bias does not play a key role here.

\subsection{Explaining the deadlock situation for \ddpg on \sv}
\label{sec:analysis}

\begin{figure}
  \centering
  \begin{subfigure}[t]{0.45 \columnwidth}
    \centering
    \includegraphics[width=\columnwidth]{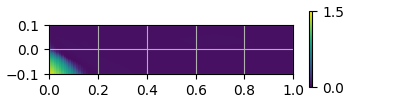}
    \caption{Critic values in the deadlock configuration. The critic is non-zero only in the region that immediately leads to a reward ($s+a<0$)}
  \end{subfigure}
  \quad
  \begin{subfigure}[t]{0.45 \columnwidth}
    \centering
    \includegraphics[width=\columnwidth]{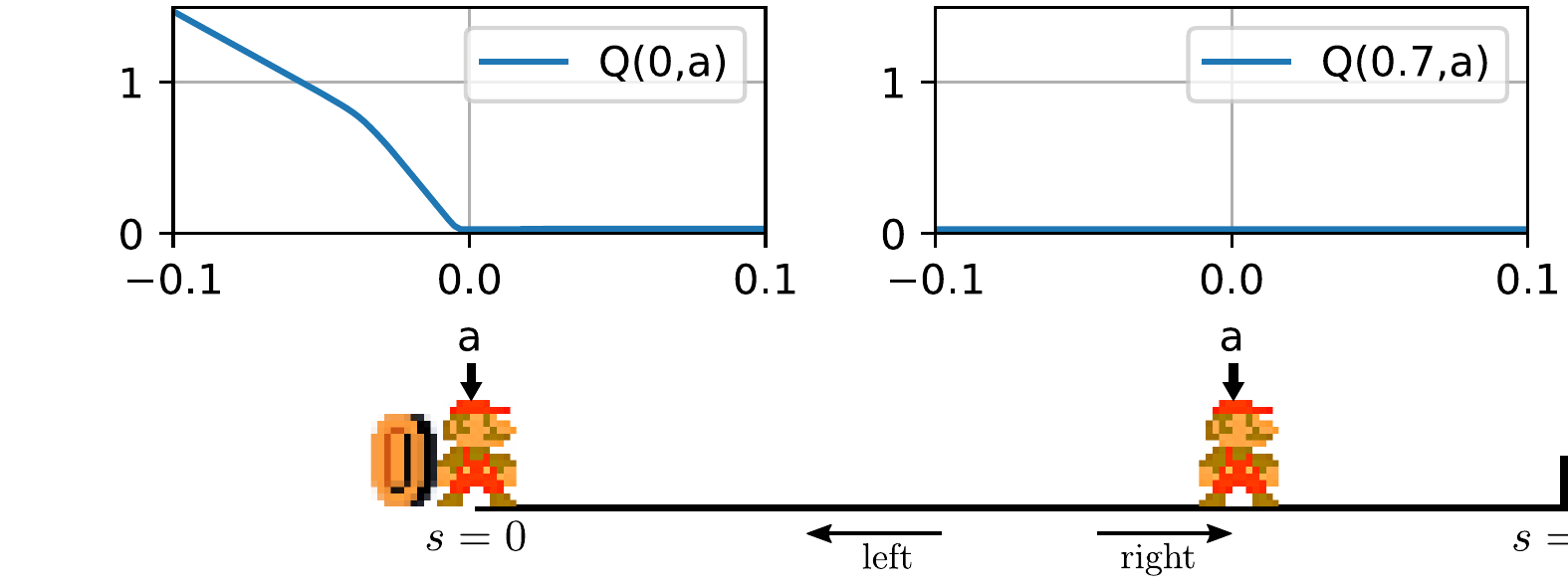}
    \caption{Two snapshots of the critic for different states in a failed run. The high $Q$ values in the $s+a<0$ region are not propagated.}
    \label{fig:svcriticB}
  \end{subfigure}
  \caption{Visualization of the critic in a failing run, in which the actor is stuck to $\forall s, \pi(s)=0.1$.}
  \label{fig:svcritic}
\end{figure}

Up to now, we have shown that \ddpg fails about $1\%$ of times on \sv, despite the simplicity of this environment. 
We have now collected the necessary elements to explain the mechanisms of this deadlock in \sv.

\figurename~\ref{fig:svcritic} shows the value of the critic in a failed run of \ddpg on \sv. We see that the value of the reward is not propagated correctly outside of the region in which the reward is found in a single step $\left\{(s,a)\mid s+a<0\right\}$.
The key of the deadlock is that once the actor has drifted to $\forall s, \pi(s)=0.1$, it is updated according to $\left.\nabla_a Q_{\theta}(s, a)\right|_{a=\pi_\psi(s)}$ (Equation~\eqref{eq:update_actor}). \figurename~\ref{fig:svcriticB} shows that for $a=\pi(s)=0.1$, this gradient is zero therefore the actor is not updated.
Besides, the critic is updated using $y_i = r(s_i,a_i) + \gamma Q(s_i',\pi(s_i'))$ as a target. Since $Q(s_i',0.1)$ is zero, the critic only needs to be non-zero for directly rewarded actions, and for all other samples the target value remains zero.
In this state the critic loss given in Equation~\eqref{eq:ddpg_critic} is minimal, so there is no further update of the critic and no further propagation of the state-action values.
The combination of the above two facts clearly results in a deadlock.

Importantly, the constitutive elements of this deadlock do not depend on the batches used to perform the update, and therefore do not depend on the experience selection method.
We tested this experimentally by substituting the behaviour policy for the optimal policy after 20k training steps.
Results are presented in \figurename~\ref{fig:ddpg_fake} and show that, once stuck, even when it is given ideal samples, \ddpg stays stuck in the deadlock configuration.
This also explains why finding the reward early results in better performance. When the reward is found early enough, $\pi(s_0)$ has not drifted too far, and the gradient of $Q(s_0,a)$ at $a = \pi(s_0)$ drives the actor back into the correct direction.

Note however that even when the actor drifts to the right, \ddpg does not always fail. Indeed, because of function approximators the shape of the critic when finding the reward for the first time varies, and sometimes converges slowly enough for the actor to be updated before the convergence of the critic.

\begin{figure}
  \center
  \includegraphics[width=\textwidth]{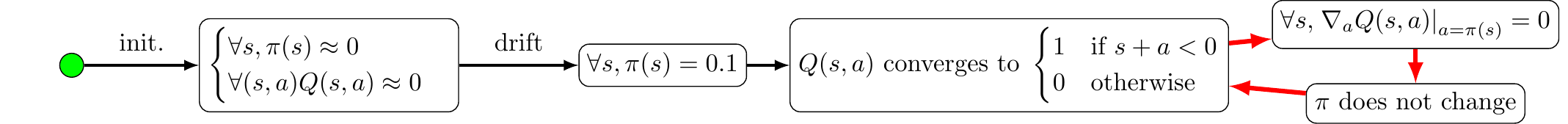}
  \caption{Deadlock observed in \sv, represented as the cycle of red arrows.}
  \label{fig:cycle_sv}
\end{figure}

\figurename~\ref{fig:cycle_sv} summarizes the above process. The entry point is represented using a green dot. First, the actor drifts to $\forall s, \pi(s)=0.1$, then the critic converges to $Q^\pi$ which is a piecewise-constant function (Experiment in \figurename~\ref{fig:svcritic}, proof in Theorem~\ref{thm:fixedpoint} in Appendix~\ref{sec:proof_sv}), which in turn means that the critic provides no gradient, therefore the actor is not updated (as seen in Equation~\ref{eq:update_actor}, more details in Theorem~\ref{thm:fixedpoint_actor})
\footnote{Note that \figurename~\ref{fig:svcritic} shows a critic state which is slightly different from the one presented in \figurename~\ref{fig:cycle_sv}, due to the limitations of function approximators.}.


\subsection{Generalization}
\label{sec:generalizations}

Our study of \sv revealed how \ddpg can get stuck in this simplistic environment. We now generalize to the broader context of more general continuous action actor critic algorithms, including at least \ddpg and \tddd, and acting in any deterministic and sparse reward environment. The generalized deadlock mechanism is illustrated in \figurename~\ref{fig:cycle} and explained hereafter in the idealized context of perfect approximators, with formal proofs rejected in appendices.

\begin{figure}
  \center
  \includegraphics[width=0.6\textwidth]{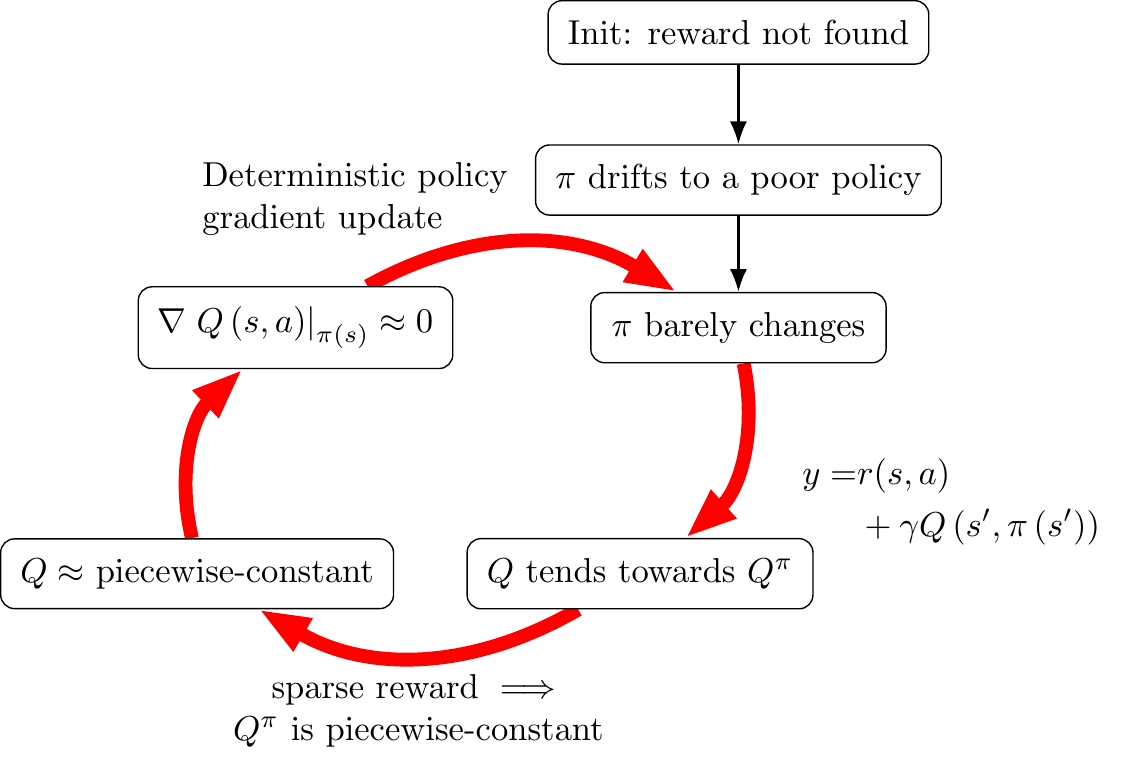}
  \caption{A cyclic view of the undesirable convergence process in continuous action actor-critic algorithms, in the deterministic and sparse reward case.}
  \label{fig:cycle}
\end{figure}

\paragraph{Entry point:} As shown in the previous section, before the behaviour policy finds any reward, training the actor and critic can still trigger non-negligible updates that may cause the actor to quickly reach a poor state and stabilize. This defines our entry point in the process.

\paragraph{$\mathbf{Q}$ tends towards $\mathbf{Q^{\boldsymbol{\pi}}}$:} A first step into the cycle is that, if the critic is updated faster than the policy, the update rule of the critic $Q$ given in Equation~\eqref{eq:ddpg_critic} makes $Q$ converge to $Q^\pi$. This is presented in detail in Appendix~\ref{sec:proofQ-Qpi}.

\paragraph{$\mathbf{Q^{\boldsymbol{\pi}}}$ is piecewise-constant:} In Appendix~\ref{sec:flatQpi}, we then show that, in a deterministic environment with sparse terminal rewards, $Q^\pi$ is piecewise-constant because $V^\pi(s')$ only depends on two things: the (integer) number of steps required to reach a rewarded state from $s'$, and the value of this reward state, which is itself piecewise-constant. Note that we can reach the same conclusion with non-terminal rewards, by making the stronger hypothesis on the actor that $\forall s, r(s,\pi(s))=0$. Notably, this is the case for the actor $\forall s, \pi(s)=0.1$ on \sv.

\paragraph{Q is approximately piecewise-constant and $\mathbf{\left.\nabla_a Q(s,a)\right|_{a=\pi(s)} \approx 0}$:} Quite obviously, from $Q^\pi$ is piecewise-constant and $Q$ tends towards $Q^\pi$, we can infer that $Q$ progressively becomes almost piecewise-constant as the cyclic process unfolds. Actually, the $Q$ function is estimated by a function approximator which is never truly discontinuous. The impact of this fact is studied in Section~\ref{sec:approx}. However, we can expect $Q$ to have mostly flat gradients since it is trained to match a piecewise-constant function. We can thus infer that, globally, $\left.\nabla_a Q(s,a)\right|_{a=\pi(s)} \approx 0$. And critically, the gradients in the flat regions far from the discontinuities give little information as to how to reach regions of higher values.

\paragraph{$\boldsymbol{\pi}$ barely changes:} \ddpg uses the deterministic policy gradient update, as seen in Equation~\eqref{eq:update_actor}. This is an analytical gradient that does not incorporate any stochasticity, because $Q$ is always differentiated exactly at $(s,\pi(s))$. 
Thus the actor update is stalled,  even when the reward is regularly found by the behaviour policy. This closes the loop of our process.

\subsection{Consequences of the convergence cycle}

As illustrated with the red arrows in \figurename~\ref{fig:cycle}, the more loops performed in the convergence process, the more the critic tends to be piecewise-constant and the less the actor tends to change. Importantly, this cyclic convergence process is triggered as soon as the changes on the policy drastically slow down or stop. What matters for the final performance is the quality of the policy reached before this convergence loop is triggered. Quite obviously, if the loop is triggered before the policy gets consistently rewarded, the final performance is deemed to be poor.

The key of this undesirable convergence cycle lies in the use of the deterministic policy gradient update given in Equation~\eqref{eq:update_actor}.
Actually, rewarded samples found by the exploratory behaviour policy $\beta$ tend to be ignored by the conjunction of two reasons. First, the critic is updated using $Q(s',\pi(s'))$ and not $Q(s,\beta(s))$, thus if $\pi$ differs too much from $\beta$, the values brought by $\beta$ are not properly propagated. Second, the actor being updated through \eqref{eq:update_actor}, i.e. using the analytical gradient of the critic with respect to the actions of $\pi$, there is no room for considering other actions than that of $\pi$. Besides, the actor update involves only the state $s$ of the sample taken from the replay buffer, and not the reward found from this sample $r(s,a)$ or the action performed. For each sample state $s$, the actor update is intended to make $\pi(s)$ converge to $\argmax_a \pi(s,a)$ but the experience of different actions performed for identical or similar states is only available through $Q(s,\cdot)$, and in \ddpg it is only exploited through the gradient of $Q(s,\cdot)$ at $\pi(s)$, so the process can easily get stuck in a local optimum, especially if the critic tends towards a piecewise-constant function, which as we have shown happens when the reward is sparse. 
Besides, since \tddd also updates the actor according to \eqref{eq:update_actor} and the critic according to \eqref{eq:ddpg_critic}, it is susceptible to the same failures as \ddpg.

%
%
%
%
%

\subsection{Impact of function approximation}
\label{sec:approx}


We have just explained that when the actor has drifted to an incorrect policy before finding the reward, an undesirable convergence process should result in \ddpg getting stuck to this policy. However, in \sv, we measured that the actor drifts to a policy moving to the right in $50\%$ of cases, but the learning process only fails $1\%$ of times. More generally, despite the issues discussed in this paper, \ddpg has been shown to be efficient in many problems.
This better-than-predicted success can be attributed to the impact of function approximation.

\begin{figure}
\begin{subfigure}[t]{0.35 \textwidth}
  \centering
  \includegraphics[width=\textwidth]{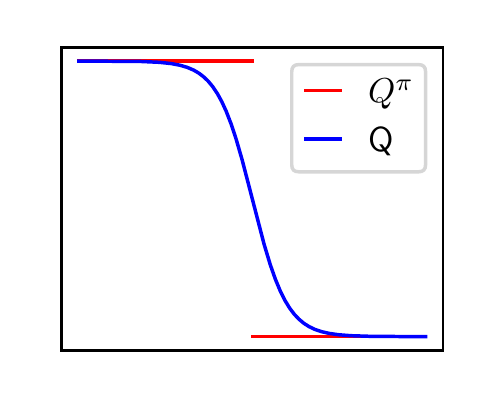}
   \caption{ }
  \label{fig:approx_safe}
\end{subfigure}
\hfill
\begin{subfigure}[t]{0.45 \textwidth}
    \centering
    \includegraphics[width=\textwidth]{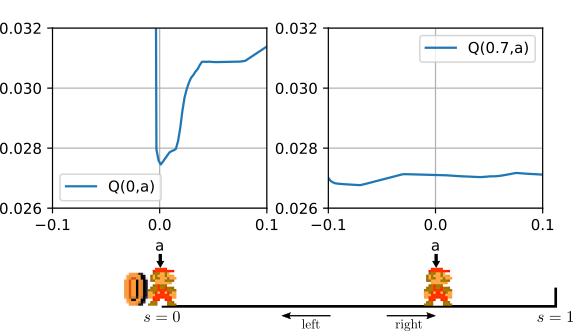}
   \caption{ }
  \label{fig:approx_overshoot}
\end{subfigure}
\caption{(a) Example of a monotonous function approximator. (b) Simply changing the vertical scale of the graphs presented in \figurename~\ref{fig:svcriticB} reveals that the function approximator is not perfectly flat, and has many unwanted local extrema. Specifically, continuously moving from $\pi(0)=0.1$ to $\pi(0)<0$ requires crossing a significant valley in $Q(0,a)$, while $\pi(0)=0.1$ is a strong local maximum.}
\end{figure}


\figurename~\ref{fig:approx_safe} shows a case in which the critic approximates $Q^\pi$ while keeping a monotonous slope between the current policy value and the reward. In this case, the actor is correctly updated towards the reward (if it is close enough to the discontinuity). This is the most often observed case, and naturally we expect approximators to smooth out discontinuities in target functions in a monotonous way, which facilitates gradient ascent.
However, the critic is updated not only in state-action pairs where $Q^\pi(s,a)$ is positive, but also at points where $Q^\pi(s,a)=0$, which means that the bottom part of the curve also tends to flatten. As this happens, we can imagine phenomena that are common when trying to approximate discontinuous functions, such as the overshoot observed in \figurename~\ref{fig:approx_overshoot}. In this case, the gradient prevents the actor from improving.

\section{Potential solutions}
\label{sec:solutions}

In the previous section, we have shown that actor-critic algorithms such as \ddpg and \tddd could not recover from early convergence to a poor policy due to the combination of three factors whose dependence is highlighted in \figurename~\ref{fig:cycle}: the use of the deterministic policy gradient update, the use of $Q(s',\pi(s'))$ in the critic update, and the attempt to address sparse reward in deterministic environments.
In this section, we categorize existing or potential solutions to the above issue in terms of which of the above factor they remove.

\paragraph{Avoiding sparse rewards:}
Transforming a sparse reward problem into a dense one can solve the above issue as the critic should not converge to a piecewise-constant function anymore. This can be achieved for instance by using various forms of shaping \citep{konidaris2006autonomous} or by adding auxiliary tasks \citep{jaderberg2016reinforcement,riedmiller2018learning}.
We do not further investigate these solutions here, as they are mainly problem-dependent and may introduce bias when the reward transformation results in deceptive gradient or modifies the corresponding optimal policy.

\paragraph{Replacing the policy-based critic update:}

\begin{figure}
\begin{subfigure}[t]{0.24 \textwidth}
  \centering
  \includegraphics[width=\linewidth]{simple_v0.pdf}
   \caption{ }
  \label{fig:ddpg-argmax}
\end{subfigure}
\hfill
\begin{subfigure}[t]{0.24 \textwidth}
    \centering
    \includegraphics[width=\textwidth]{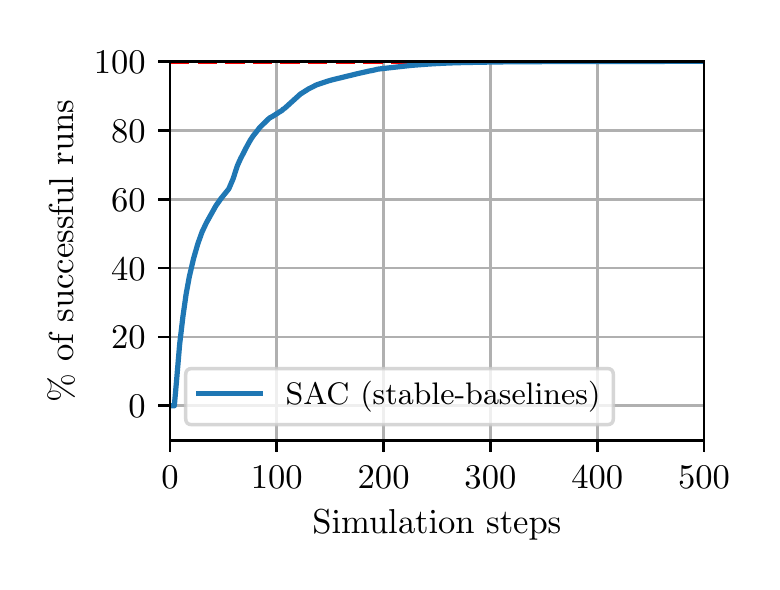}
   \caption{ }
  \label{fig:sac_sv}
\end{subfigure}
\hfill
\begin{subfigure}[t]{0.24 \textwidth}
    \centering
    \includegraphics[width=\textwidth]{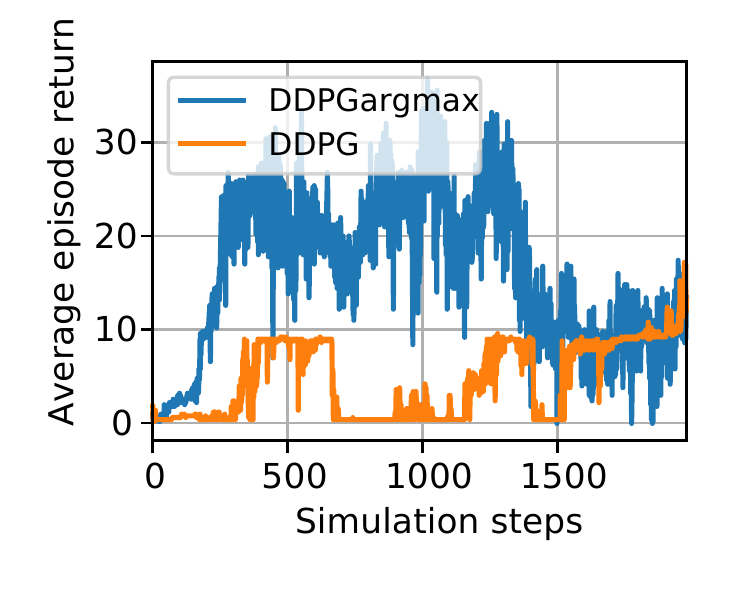}
   \caption{ }
  \label{fig:rc}
\end{subfigure}
\hfill
\begin{subfigure}[t]{0.24 \textwidth}
    \centering
    \includegraphics[width=\textwidth]{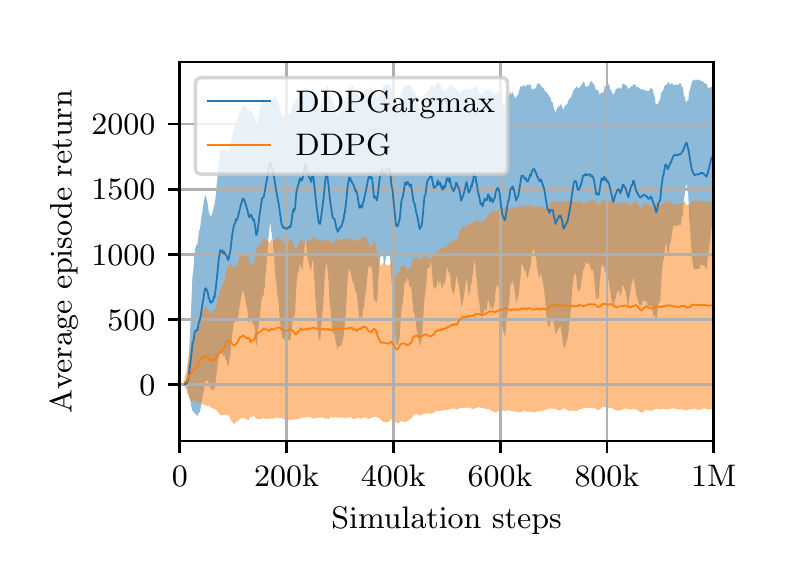}
   \caption{ }
  \label{fig:hc}
\end{subfigure}
  \caption{(a) Applying \ddpgarg to \sv. (b) Applying \sac to \sv. In both cases, the success rate reaches 100\% quickly. (c) Applying \ddpg and \ddpgarg to a sparse-reward variant of the \rc environment. (d) Applying \ddpg and \ddpgarg to a sparse-reward variant of the \hc environment. Details on the changes made to \rc and \hc are available in Appendix~\ref{sec:sparsification}.}
\end{figure}

As explained above, if some transition $(s,a,s')$ leading to a reward is found in the replay buffer, the critic update corresponding to this transition uses $Q(s',\pi(s'))$, therefore not propagating the next state value that the behaviour policy may have found. Of course, when using the gradient from the critic, the actor update should tend to update $\pi$ to reflect the better policy such that $\pi(s') \rightarrow a'$, but the critic does not always provide an adequate gradient as shown before.

If performing a maximum over a continuous action space was possible, using $\max_a Q(s',a)$ instead of $Q(s',\pi(s'))$ would solve the issue. Several works start from this insight. Some methods directly sample the action space and look for such an approximate maximum \citep{kalashnikov2018qt,simmons2019q}.
To show that this approach can fix the above issue, we applied it to the \sv environment. We take a straightforward implementation where the policy gradient update in \ddpg is replaced by sampling 100 different actions, finding the argmax over these actions of $Q(s,a)$, and regressing the actor towards the best action we found. We call the resulting algorithm \ddpgarg, and more details are available in Appendix~\ref{sec:argmax}. Results are shown in \figurename~\ref{fig:ddpg-argmax}, in which we see that the success rate quickly reaches 100\%.

Quite obviously, even if sampling can provide a good enough baseline for simple enough benchmarks, these methods do not scale well to large actions spaces. Many improvements to this can be imagined by changing the way the action space is sampled, such as including $\pi(s)$ in the samples, to prevent picking a worse action than the one provided by the actor, sampling preferentially around $\pi(s)$, or around $\pi(s+\epsilon)$, or just using actions taken from the replay buffer.

Interestingly, using a stochastic actor such as in the Soft Actor Critic (\sac) algorithm \citep{haarnoja2018soft,haarnoja2018softa} can be considered as sampling preferentially around $\pi(s+\epsilon)$ where $\epsilon$ is driven by the entropy regularization term. In \figurename~\ref{fig:sac_sv}, we show that \sac also immediately solves \sv.

Another approach relies on representing the critic as the $V$ function rather than the $Q$ function.
The same way $\pi(s)$ tends to approximate $\argmax_a Q(s,a)$, $V$ tends to approximate $\max_a Q(s,a)$, and is updated when finding a transition that raises the value of a state. Using $V$, performing a maximum in the critic update is not necessary anymore. The prototypical actor-critic algorithm using a model of $V$ as a critic is \cacla \citep{vanhasselt07reinforcement}. However, approximating $V$ with neural networks can prove more unstable than approximating $Q$, as function approximation can be sensitive to the discontinuities resulting form the implicit maximization over $Q$ values.

\paragraph{Replacing the deterministic policy gradient update:}
Instead of relying on the deterministic policy gradient update, one can rely on a stochastic policy to perform a different actor update. This is the case of \sac, as mentioned just above. Because \sac does not use $Q(s',\pi(s'))$ in its update rule, it does not suffer from the undesirable convergence process described here.

Another solution consists in completely replacing the actor update mechanism, using regression to update $\pi(s)$ towards any action better than the current one.
This could be achieved by updating the actor and the critic simultaneously: when sampling a higher-than-expected critic value $y_i>Q(s_i,a_i)$, one may update $\pi(s_i)$ towards $a_i$ using:  
\begin{equation}
\label{eq:actor_reg}
 L_{\psi} = \sum_i \delta_{y_i > Q(s_i,\pi(s_i))} \left( \pi(s_i) - a_i \right).
\end{equation}
This is similar to the behaviour of \cacla, as analyzed in \cite{zimmer2019exploiting}.

\paragraph{Larger benchmarks}

  Whether the deadlock situation investigated so far occurs more in more complex environments is an important question. To investigate this, we performed additional experiments based on more complex environments, namely sparse versions of \rc and \hc.
  Results are depicted in \figurename~\ref{fig:rc} and \ref{fig:hc} and more details are presented in Appendix~\ref{sec:sparsification}.
  One can see that \ddpgarg outperforms \ddpg, which seems to indicate that the failure mode we are studying is also at play.
  However, with higher-dimensional and more complex environments, the analysis becomes more difficult and other failures modes such as the ones related to the deadly triad, the extrapolation error or the over-estimation bias might come into play, so it becomes harder to quantitatively analyze the impact of the phenomenon we are focusing on. On one hand, this point showcases the importance of using very elementary benchmarks in order to study the different failure modes in isolation. On the other hand, trying to sort out and quantify the impact of the different failure modes in more complex environments is our main objective for future work.


\section{Conclusion and future work}

In \rl, continuous action and sparse reward environments are challenging. In these environments, the fact that a good policy cannot be learned if exploration is not efficient enough to find the reward is well-known and trivial. In this paper, we have established the less trivial fact that, if exploration does find the reward consistently but not early enough, an actor-critic algorithm can get stuck into a configuration from which rewarded samples are just ignored. We have formally characterized the reasons for this situation and we have outlined existing and potential solutions. Beyond this, we believe our work sheds new light on the convergence regime of actor-critic algorithms.

Our study was mainly built on a simplistic benchmark which made it possible to study the revealed deadlock situation in isolation from other potential failure modes such as exploration issues, the over-estimation bias, extrapolation error or the deadly triad. The impact of this deadlock situation in more complex environments is a pressing question. For this, we need to sort out and quantify the impact of these different failure modes. Using new tools such as the ones provided in \cite{ahmed2019understanding}, recent analyses of the deadly triad such as \cite{achiam2019towards} as well as simple, easily visualized benchmarks and our own tools, for future work we aim to conduct deeper and more exhaustive analysis of all the instability factors of \ddpg-like algorithms, with the hope to contribute in fixing them.
    
\section{Acknowledgements}
This work was partially supported by the French National Research Agency (ANR),
Project ANR-18-CE33-0005 HUSKI.

\bibliography{problem_with_ddpg}
\bibliographystyle{iclr2020_conference}

\appendix
\DeclarePairedDelimiter\ceil{\lceil}{\rceil}
\DeclarePairedDelimiter\floor{\lfloor}{\rfloor}


\section{Two regimes of DDPG}
\label{sec:regimes}

In this section, we characterize the behavior of \ddpg as an intermediate between two extremes, that we respectively call the {\em critic-centric view}, where the actor is updated faster, resulting in an algorithm close to \ql, and the {\em actor-centric view}, where the critic is updated faster, resulting in a behaviour more similar to Policy Gradient.

\subsection{Actor update: the critic-centric view}

The \ql algorithm \citep{watkins89} and its continuous state counterpart \dqn \citep{mnih2013playing} rely on the computation of a policy which is greedy with respect to the current critic at every time step, as they simply take the maximum of the Q-values over a set of discrete actions. In continuous action settings, this amounts to an intractable optimization problem if the action space is large and nontrivial.


We get a simplified vision of \ddpg by considering an extreme regime where the actor updates are both fast enough and good enough so that $\forall s, \pi(s) \approx \argmax_a Q(s,a)$. We call this the \emph{critic-centric} vision of \ddpg, since the actor updates are assumed to be ideal and the only remaining training is performed on the critic.

In this regime, by replacing $\pi(s)$ with $\argmax_a Q(s,a)$ in Equation~\eqref{eq:ddpg_critic}, we get $y_i = r_i + \gamma(1-t_i)\max_a Q(s_i, a)$, which corresponds to the update of the critic in \ql and \dqn. A key property of this regime is that, since the update in based on a maximum over actions, the resulting algorithm is truly off-policy. We can thus infer that most of the off-policiness property of \ddpg comes from keeping it close to this regime.

\subsection{Critic update: the actor-centric view}
\label{sec:ddpg_critic}

Symmetrically to the previous case, if the critic is updated well and faster than the actor, it tends to represent the critic of the current policy, $Q^\pi$.
Furthermore, if the actor tends to change slowly enough, critic updates can be both fast and good enough so that it reaches the fixed point of the Bellman equation, that is $\forall (s, a, r, t, s'), Q(s,a)=r+\gamma(1-t)Q(s',\pi(s'))$.

In this case, the optimization performed in \eqref{eq:ddpg_actor} mostly consists in updating the actor so that it exploits the corresponding critic by applying the deterministic policy gradient on the actor. This gives rise to a \emph{actor-centric} vision of \ddpg.


\section{Deadlock in \sv}
\label{sec:proof_sv}


In this section, we prove that there exists a state of \ddpg that is a deadlock in the \sv environment. This proof directly references \figurename~\ref{fig:cycle_sv}.
Let us define two functions $Q$ and $\pi_\psi$ such that:
\begin{align}
   \forall (s,a), Q(s,a) &= \mathbb{1}_{s+a<0} \label{eq:sv_critic} \\
   \forall s\in S, \pi_\psi(s)&=0.1
   \label{eq:sv_actor}
\end{align}
From now on, we will use notation $\pi \coloneqq \pi_\psi$.

\begin{theorem}
    ($Q$,$\pi_\psi$) is a fixed point for the critic update.
    \label{thm:fixedpoint}
\end{theorem}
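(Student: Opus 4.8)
The plan is to show that the pair $(Q,\pi_\psi)$ makes the critic loss $L_\theta$ of Equation~\eqref{eq:ddpg_critic} identically zero, regardless of which mini-batch is drawn. Since $L_\theta$ is a sum of squares, zero is its global minimum, so $\nabla_\theta L_\theta$ vanishes and a gradient step leaves the critic unchanged — which is exactly what it means to be a fixed point of the critic update. Concretely, it suffices to verify the Bellman consistency equation
$$Q(s,a) = r(s,a) + \gamma\bigl(1-t(s,a)\bigr)\,Q\bigl(s',\pi(s')\bigr)$$
for every transition $(s,a,r,t,s')$ allowed by the \sv dynamics, that is, with $s\in[0,1]$, $a\in[-0.1,0.1]$, $r=t=\mathbb{1}_{s+a<0}$ and $s'=\min(1,\max(0,s+a))$. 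If the residual is zero for every such transition, then it is zero for every sample of any replay buffer, which also accounts for the observation that the deadlock does not depend on the experience-selection method.

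The key observation, and the step I would isolate first, is that the bootstrapped term always vanishes under this policy. Every reachable next state satisfies $s'=\min(1,\max(0,s+a))\ge 0$, and $\pi(s')=0.1$, so $s'+\pi(s')\ge 0.1>0$ and hence $Q(s',\pi(s'))=\mathbb{1}_{s'+\pi(s')<0}=0$. The Bellman target therefore collapses to $r(s,a)$ alone, independently of $s'$.

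With the bootstrap term gone, I would close the argument by a two-case check on the sign of $s+a$. If $s+a<0$, then $r=t=1$ and the right-hand side is $1=\mathbb{1}_{s+a<0}=Q(s,a)$. If $s+a\ge 0$, then $r=t=0$ and the right-hand side is $\gamma\cdot 0=0=\mathbb{1}_{s+a<0}=Q(s,a)$. In both cases the residual is zero, establishing the claim.

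I do not expect a genuine obstacle: the result is a direct verification. The only point requiring care — and the conceptual heart of the deadlock — is reading ``fixed point of the critic update'' as ``the critic loss attains its minimum with vanishing gradient,'' and then noticing that the saturated policy $\pi\equiv 0.1$ forces $s'+\pi(s')$ strictly positive for all reachable $s'$, which is precisely why the nonzero reward available at $s+a<0$ is never propagated backward to the states from which that reward is one step away.
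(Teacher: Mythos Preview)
Your proposal is correct and follows essentially the same argument as the paper's proof: show that for every transition $(s_i,a_i,r_i,t_i,s'_i)$ the bootstrapped target $y_i$ equals $Q(s_i,a_i)$, hence $L_\theta=0$ is minimal and the critic is unchanged. Your version is slightly more explicit---you spell out why $s'\ge 0$ forces $Q(s',\pi(s'))=0$ and split into the two cases on the sign of $s+a$---but the structure and key observation (the saturated policy kills the bootstrap term) are identical to the paper's.
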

\begin{proof}
    The critic update is governed by Equation~\ref{eq:ddpg_critic}.

    Let $(s_i, a_i, r_i, t_i, s'_i)$ be a sample from the replay buffer. The environment dictates that $r_i = t_i = \mathbb{1}_{s_i+a_i<0}$.

    \begin{flalign*}
        && y_i &= r_i + \gamma(1-t_i) Q\left(s'_i, \pi_\psi(s'_i)\right)\\
        && y_i &= r_i + \gamma(1-t_i) Q\left(s'_i, 0.1\right) && \text{by~\eqref{eq:sv_actor}}\\
        && y_i &= r_i  && \text{by~\eqref{eq:sv_critic}}\\
        && y_i &= \mathbb{1}_{s_i+a_i<0} \\
        && y_i &= Q(s_i,a_i)  && \text{by~\eqref{eq:sv_critic}}
    \end{flalign*}

    Therefore, for each sample $y_i = Q(s_i,a_i)$, and $L$ is null and minimal. Therefore $\theta$ will not be updated during the critic update.
\end{proof}

\begin{theorem}
    ($Q$,$\pi_\psi$) is a fixed point for the actor update.
    \label{thm:fixedpoint_actor}
\end{theorem}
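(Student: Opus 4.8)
The plan is to show that the parameter update in Equation~\eqref{eq:update_actor} vanishes identically, so that $\psi$ (and hence $\pi_\psi$) is left unchanged by the actor step. The right-hand side of \eqref{eq:update_actor} is a sum over mini-batch samples of the Jacobian $\partial \pi_\psi(s_i)/\partial \psi$ transposed against the action-gradient of the critic evaluated at the current policy action, $\left.\nabla_a Q(s_i,a)\right|_{a=\pi_\psi(s_i)}$. Since the Jacobian factor is multiplied on the right by this action-gradient, it suffices to prove that the action-gradient is zero at every sampled state; the precise form of the Jacobian then plays no role, and we never need to compute it.

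First I would fix an arbitrary sample $(s_i,a_i,r_i,t_i,s'_i)$ and substitute $\pi_\psi(s_i)=0.1$ using \eqref{eq:sv_actor}, so that the critic gradient is evaluated at $a=0.1$. Then I would examine $Q(s_i,a)=\mathbb{1}_{s_i+a<0}$ as a function of $a$ in a neighbourhood of $a=0.1$. Because the state space is $S=[0,1]$, we have $s_i\ge 0$, hence the discontinuity of $a\mapsto\mathbb{1}_{s_i+a<0}$ sits at $a=-s_i\le 0$, whereas the evaluation point is $a=0.1$. The two are separated by a margin of at least $0.1$, so on a whole neighbourhood of $a=0.1$ the critic is the constant $0$. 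Consequently $\left.\nabla_a Q(s_i,a)\right|_{a=0.1}=0$. Summing over the mini-batch, every term of \eqref{eq:update_actor} is zero, so the update leaves $\psi$ unchanged and $(Q,\pi_\psi)$ is a fixed point for the actor update.

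The main subtlety I would be careful about is the discontinuity of the indicator critic: the derivative $\nabla_a Q$ does not exist exactly at $a=-s_i$, so the argument only goes through because the policy action $a=0.1$ lies strictly inside the flat region $s_i+a>0$ for every admissible state. I would therefore make explicit the uniform margin of $0.1$ between the evaluation point and the jump (using $s_i\in[0,1]$), in order to justify that the gradient is genuinely well defined and equal to zero at each point we actually use, rather than relying on a weaker almost-everywhere statement. This is exactly the mechanism described informally around \figurename~\ref{fig:svcriticB}: at $a=\pi(s)=0.1$ the critic is locally flat, so the deterministic policy gradient supplies no signal and the actor cannot escape the saturated policy.
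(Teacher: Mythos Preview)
Your proposal is correct and follows the same approach as the paper: show that $\left.\nabla_a Q(s_i,a)\right|_{a=\pi_\psi(s_i)}=0$ for every sample, so the actor update \eqref{eq:update_actor} vanishes. In fact you are more careful than the paper's proof, which simply asserts $\nabla_a Q(s_i,a)=0$ without isolating the evaluation point; your explicit margin argument (the jump sits at $a=-s_i\le 0$ while the evaluation is at $a=0.1$) is a welcome clarification that the gradient is genuinely well defined and zero there.
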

\begin{proof}
    The actor update is governed by Equation~\ref{eq:update_actor}.

    Let $\{(s_i, a_i, r_i, t_i, s'_i)\}$ be a set of samples from the replay buffer. The environment dictates that $\forall i, r_i = t_i = \mathbb{1}_{s_i+a_i<0}$.

    \[
    \begin{aligned}
        \psi &\leftarrow \psi + \alpha \sum_i \frac{\partial \pi_\psi(s_i)}{\partial \psi}^T \left.\nabla_a Q(s_i, a)\right|_{a=\pi_\psi(s_i)}
    \end{aligned}
    \]

    Since $Q(s_i,a) = \mathbb{1}_{s_i+a<0}$, $\nabla_a Q(s_i,a)=0$, so $\psi$ will not be updated during the actor update.
\end{proof}

In this section, we assume that $Q$ is any function, however in implementations $Q$ is often a parametric neural network $Q_\theta$, which cannot be discontinuous. Effects of this approximation are discussed in Section~\ref{sec:approx}.


\section{Convergence of the critic to $Q^\pi$}
\label{sec:proofQ-Qpi}

\paragraph{Notation} For a state-action pair $s,a$, we define $s_1$ as the result of applying action $a$ at state $s$ in the deterministic environment. For a given policy $\pi$, we define $a_1$ as $\pi(s_1)$. Recursively, for any $i\geq 1$, we define $s_{i+1}$ as the result of applying action $a_i$ to state $s_i$, and $a_i$ as $\pi(s_i)$.

\begin{definition}
    Let $(s,a) \in S\times A$. If $(s,a)$ is terminal, then we set $N=0$. Otherwise, we set $N$ to the number of subsequent transitions with policy $\pi$ to reach a terminal state. Therefore, the transition $(s_N,a_N)$ is always terminal. We generalize by setting $N=\infty$ when no terminal transition is ever reached.
    
    We define the state-action value function of policy $\pi$ as:

    \[ Q^\pi(s,a) \coloneqq r(s,a) + \sum_{i=1}^N \gamma^i r(s_i,a_i) \]

    Note that when $N=\infty$, the sum converges under the hypothesis that rewards are bounded and $\gamma < 1$.
    \label{def:qpi}
\end{definition}

If $\pi$ is fixed, $Q$ is updated regularly via approximate dynamic programming with the Bellman operator for the policy $\pi$. Under strong assumptions, or assuming exact dynamic programming, it is possible to prove (\cite{geist2011parametric}) that the iterated application of this operator converges towards a unique function $Q^\pi$, which corresponds to the state-action value function of $\pi$ as defined above. It is usually done by proving that the Bellman operator is a contraction mapping, and also applies in deterministic cases.

However, when using approximators such as neural nets, no theroretical results of convergence exists, to the best of our knowledge. In this paper we assume that this convergence is true, and in the experimental results we did not observe any failures to converge towards $Q^\pi$. On the contrary, we observe that this converge occurs, and can be what starts the deadlock cycle studied in Section~\ref{sec:generalizations}.


\section{Proof that $Q^\pi$ is piecewise-constant}
\label{sec:flatQpi}

In this section, we show that in deterministic environments with terminal sparse rewards, $Q^\pi$ is piecewise-constant.

\begin{definition}
    In this article, for $I\subset \mathbb{R}^n$, we say a function $f:I\rightarrow \mathbb{R}$ is piecewise-constant if $\forall x_0\in I$, either $\nabla_x\left.f(u)\right|_{x=x_0}=0$, or $f$ has no gradient at $x_0$.
    \label{def:piecewiseconstant}
\end{definition}

\begin{theorem}
    In a deterministic environment with terminal sparse rewards, for any $\pi$, $Q^\pi$ is piecewise-constant.
\end{theorem}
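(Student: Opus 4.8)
The plan is to establish Definition~\ref{def:piecewiseconstant} directly, i.e. to show that at every point where $Q^\pi$ admits a gradient, that gradient is zero. The starting move is to exploit the terminal sparse reward hypothesis to collapse the sum in Definition~\ref{def:qpi}. Along the deterministic rollout $(s,a)\to(s_1,a_1)\to\cdots\to(s_N,a_N)$, every intermediate transition is non-terminal and hence unrewarded, so only the final transition contributes. Writing $f$ for the deterministic transition so that $s_1=f(s,a)$, this yields $Q^\pi(s,a)=\gamma^N r(s_N,a_N)$ for a non-terminal pair, with $Q^\pi(s,a)=r(s,a)$ when $(s,a)$ is itself terminal ($N=0$) and $Q^\pi(s,a)=0$ when $N=\infty$. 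This exhibits $Q^\pi$ as a product of two ingredients: the discount factor raised to the integer step count $N(s,a)$, and the value $r(s_N,a_N)$ of the reward collected at the terminal state reached. Determinism is essential here, since it makes $N$ and the rollout genuine functions of $(s,a)$ rather than expectations over trajectories.

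First I would analyze the step count $N$. Under mild regularity ($f$ and $\pi$ differentiable, as is the case in \sv), $N$ is an integer-valued function, so on the interior of each level set $\{N=k\}$ the factor $\gamma^N$ equals the constant $\gamma^k$ and contributes zero gradient. The level sets meet only along boundaries where $N$ jumps; across such a boundary $Q^\pi$ either jumps (when the relevant terminal reward is nonzero) and so has no gradient, or is locally zero, and in either case Definition~\ref{def:piecewiseconstant} is satisfied. It therefore suffices to examine a point $(s_0,a_0)$ in the interior of some $\{N=k\}$ with $k$ finite, where locally $Q^\pi=\gamma^k\,(r\circ\Phi_k)$ with $\Phi_k(s,a)=(s_k,a_k)$ the $k$-step rollout map, which is differentiable wherever $f$ and $\pi$ are.

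Next I would treat the terminal reward factor $r\circ\Phi_k$, invoking the sparse reward assumption that the reward, as a function on $S\times A$, is itself piecewise-constant (for instance an indicator of a region, as in \sv), so $\nabla r=0$ wherever it exists. If $Q^\pi$ is differentiable at $(s_0,a_0)$ and $r$ is differentiable at $\Phi_k(s_0,a_0)$, the chain rule gives $\nabla Q^\pi=\gamma^k\,D\Phi_k^{\top}\,\nabla r=0$. The remaining case is when $\Phi_k(s_0,a_0)$ lands on a discontinuity of $r$, where the composition is generically non-differentiable and $(s_0,a_0)$ is again a no-gradient point. Combining the two factors, wherever both $\gamma^N$ and $r(s_N,a_N)$ have vanishing gradient so does their product, and wherever either fails to be differentiable so does $Q^\pi$; this is precisely the sense in which $Q^\pi$ is piecewise-constant.

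The main obstacle I expect is making the boundary reasoning genuinely rigorous rather than merely ``generic'': one must argue that the points where $N$ jumps, or where the rollout image meets a discontinuity of $r$, are exactly the non-differentiability points of $Q^\pi$, and in particular rule out the pathology that $Q^\pi$ is differentiable with nonzero gradient while one of its factors is not. A second delicate point is the interface with the region $\{N=\infty\}$, where $Q^\pi=0$ while nearby values $\gamma^k r(\cdot)$ accumulate toward $0$ as $k\to\infty$; continuity must be checked there so that no spurious nonzero gradient appears. Both difficulties arise from the intrinsically discontinuous nature of the objects involved, and since we reason about the idealized $Q^\pi$ rather than a function approximator (deferred to Section~\ref{sec:approx}), these boundaries are exactly the discontinuities that Definition~\ref{def:piecewiseconstant} is designed to tolerate.
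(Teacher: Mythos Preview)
Your decomposition $Q^\pi=\gamma^{N}\,r(s_N,a_N)$ matches the paper's starting point, but the arguments then diverge. You proceed via level sets of $N$ and the chain rule applied to $r\circ\Phi_k$, which requires $f$ and $\pi$ to be differentiable and the sets $\{N=k\}$ to have nonempty interior---assumptions absent from the theorem (``for any $\pi$''). The paper instead works purely with the \emph{range} of $Q^\pi$: since rewards lie in a finite set $R$, every value of $Q^\pi$ belongs to $M=\{\gamma^n r : n\in\mathbb{N},\ r\in R\}$, and the paper shows directly that any function with values in $M$ has zero gradient wherever the gradient exists. For a point with $Q^\pi\neq 0$ this is immediate, since each nonzero element of $M$ is isolated in $\mathbb{R}$. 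For a point with $Q^\pi=0$, an explicit $(\epsilon,\delta)$ computation (built by induction on $|R|$) produces a uniform constant $\nu>0$ such that arbitrarily close to $0$ there are consecutive elements $b<a$ of $M$ with $a-b>\nu a$; this forces the difference quotient $f(h)/h$ to oscillate by at least $l\nu/2$ around any putative nonzero limit $l$, so no nonzero derivative can exist.

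This range-based strategy dissolves exactly the obstacles you flag. The interface with $\{N=\infty\}$---where $Q^\pi=0$ while nearby values $\gamma^k r$ accumulate at $0$---is precisely where a factor-by-factor chain-rule analysis fails: neither factor is differentiable there, and ruling out a nonzero derivative for the product requires quantitative control on the \emph{values} of $Q^\pi$, which your decomposition does not supply. The paper in fact remarks that the proof can be trivialized under the extra assumption that around any differentiability point there is a neighbourhood on which $Q^\pi$ is continuous; your proposal is essentially that trivialized route, whereas the paper's contribution is to dispense with any regularity on $f$, $\pi$, or the level-set structure of $N$.
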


\begin{proof}
    Note that this proof can be trivialized by assuming that around any point where the gradient is defined, there exists a neighbourhood in which the function is continuous. In this case, the intermediate value theorem yields an uncountable set of values of the function in this neighbourhood, which contradicts the countable number of possible discounted rewards.
    
    The crux of the following proof is that even when no such neighbourhood exists, the gradient is either null of non-existent. This behavior is shown in \figurename~\ref{fig:convergence}.

    \begin{figure}
        \centering
        \includegraphics{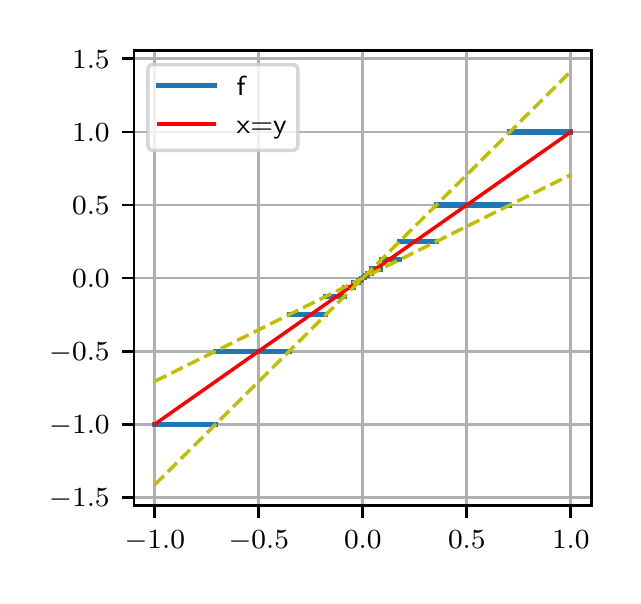}
        \caption{Example of a non-continuous function $f$ with values in $\left\{\left(\frac12\right)^n\mid n\in\mathbb{N}\right\}$, approximating the identity function. However, this function is not differentiable because the difference quotient does not converge but instead oscillates between two values.}
        \label{fig:convergence}
    \end{figure}

    Using the notations of Definition~\ref{def:qpi} and the theorem hypothesis that rewarded transitions are also terminal, we can write $Q^\pi(s,a)$ as:

    \[ Q^\pi(s,a) = \begin{cases} r(s,a) & \text{if } N=0 \\
                                  \gamma^N r(s_N,a_N) & \text{if } N \text{ is finite} \\
                                  0 & \text{otherwise.} \end{cases} \]

    We promote $N$ to a function $S\times A\rightarrow \mathbb{N} \cup \{+\infty\}$, and we define a function $u:S\times A\rightarrow \mathbb{R}$ as given in Equation~\eqref{eq:u}.

    \begin{equation}
        u(s,a)=\begin{cases}r(s,a)&\text{if }N=0\\
        r\left(s_{N(s,a)},a_{N(s,a)}\right)&\text{if }N>0\text{ finite}\\
        0&\text{otherwise.}\end{cases}
        \label{eq:u}
    \end{equation}

    Now we have $\forall (s,a)\in S\times A, Q^\pi(s,a) = \gamma^{N(s,a)} u(s,a)$.

    Let $R$ be the finite set of possible reward values.

    Therefore values of $Q^\pi$ are in a set $M = \left\{\gamma^n r \mid n\in\mathbb{N}, r\in R\right\}$. Let $M^+=M\cap\mathbb{R}^{+*}$ be the set of positive values of $M$.
    Since $R\subset \mathbb{R}$ is finite, we order all non-zero positive possible rewards in increasing order $r_1, r_2, \cdots r_k$.

    Let $M^+_k=\left\{\gamma^nr\mid n\in\mathbb{N}, r\in R_k\right\}$ where $R_k=\left\{r_1,\cdots,r_k\right\}$.

    We prove the following by recurrence over the number of possible non-zero rewards: 

    \[ H(k): \exists \nu_k>0, \forall \delta>0, \exists \text{ consecutive } b,a\in M^+_k, \delta\nu_k<a-b\text{ and }b<a<\delta \]

    \paragraph{Initialization}
    When $k=1$, $M^+=\left\{r_1 \gamma^n\mid n\in\mathbb{N}\right\}$. Let $\nu=\frac{\gamma^2}{1-\gamma}$. Let $\delta>0$.
    Let $n = \floor{\log_\gamma \frac{\delta}{r_1}}+1$.
    We have:
    \begin{align*}
        \log_\gamma\frac{\delta}{r_1}-1 
            &< n-1 \leq \log_\gamma\frac{\delta}{r_1} \\
        \log_\gamma\frac{\delta}{r_1}
            &< n < \log_\gamma\frac{\delta}{r_1} +\log_\gamma(1-\gamma)+2-\log_\gamma(1-\gamma) \\
        \log_\gamma \frac{\delta}{r_1}
            &< n < \log_\gamma \frac{\delta(1-\gamma)}{r_1}+\log_\gamma \nu \\
        \log_\gamma \frac{\delta}{r_1}
            &< n < \log_\gamma \frac{\delta\nu(1-\gamma)}{r_1} \\
        \frac{\delta\nu(1-\gamma)}{r_1}
            &< \gamma^n < \frac{\delta}{r_1} \\
        \delta\nu(1-\gamma)^2 &< r_1 \gamma^n(1-\gamma) < \delta(1-\gamma) \\
        \delta\nu &< r_1 \gamma^n - r_1 \gamma^{n+1} \text{ and } r_1\gamma^n < \delta \\
    \end{align*}
    Let $a=r_1 \gamma^n\in M^+$ and $b=r_1\gamma^{n+1}\in M^+$. $\delta\nu<a-b$ and $b<a<\delta$ therefore $H(1)$ is verified.

    \paragraph{Recurrence}
    Let $k\geq 1$, and assume $H(k)$ is true. Let $\nu_k$ be the $\nu$ chosen for $H(k)$. Let $\nu_{k+1} = \frac{\nu_k}{2}$. Let $\delta>0$. Let $b_k,a_k$ a consecutive pair chosen in $M^+_k$ such that $\delta \nu_k < a_k-b_k$ and $b_k<a_k<\delta$.

    Since $R_{k+1}$ contains only one more element than $R_k$, which is larger than all elements in $R_k$, we know that there is either one or zero elements $c\in M^+_{k+1}$ that are strictly between $a_k$ and $b_k$. If $a_k-c<c-b_k$ then let $a_{k+1} = c$ and $b_{k+1}=b_k$, otherwise $a_{k+1}=a_k$ and $b_{k+1}=c$. If $a_k$ and $b_k$ are still consecutive in $M^+_{k+1}$, then $a_{k+1}=a_k$ and $b_{k+1}=b_k$.

    This guarantees that $[b_{k+1},a_{k+1}]$ as at least half as big as $[b_k,a_k]$. Therefore, $\frac12(a_k-b_k)<a_{k+1}-b_{k+1}$, which means that $\delta\nu_{k+1} < a_{k+1}-b_{k+1}$ and $b_k<a_k<\delta$.

    Therefore $H(k+1)$ is verified.

    This also gives the general expression of $\nu$, valid for all $k$: $\nu = \left(\frac{\gamma^2}{1-\gamma}\right)^{|R|}$.

    \paragraph{Main proof}

    Using the result above, we prove that $Q^\pi(s,a)$ cannot have any non-null derivatives.

    Trivially, $Q^\pi$ cannot have a non-null derivative at a point $(s,a)$ where $Q^\pi(s,a)=q_0\neq 0$. Indeed, there exists a neighbourhood of $q_0\in M$ in which there is a single value.

    Let $x_0=(s,a)$ such that $Q(s,a)=0$. Let $v$ be a vector of the space $S\times A$. Let $f:\mathbb{R}\rightarrow\mathbb{R}$ be defined as $f(h)=Q^\pi(x_0 + hv)$.
    In the following, we show that $\frac{f(h)}{|h|}$ cannot converge to a non-null value when $h\rightarrow 0$.

    We use the $(\epsilon,\delta)$ definition of limit. If $f$ had a non-null derivative $l$ at $0$, we owould have $\forall \epsilon>0, \exists \delta>0, \forall h, |h|<\delta \implies \left|\frac{f(h)}{|h|} - l\right|<\epsilon$.

    Instead, we will show the opposite: $\exists \epsilon>0, \forall \delta>0, \exists h, |h|<\delta \text{ and } \left|\frac{f(h)}{|h|} - l\right|\geq \epsilon$.

    Using the candidate derivative $l$ and the $\nu$ value computed above that only depends on $\gamma$ and $|R|$, we set $\epsilon = \frac{l\nu}2$.

    Let $\delta>0$.

    There exists consecutive $b,a$ in $M$ such that $\delta l \nu \leq a-b$ and $b<a<\delta l$.

    We set $h = \frac{a+b}{2l}$. Note that $\frac{a+b}{2}<\delta l$ therefore $h<\delta$.

    $f(h)$ is in $M$, but $hl$ is the center of the segment $[b,a]$ of consecutive points of $M$. Therefore, the distance between $f(h)$ and $hl$ is at least $\frac{a-b}2$.

    \[\left|f(h)-hl\right|\geq\frac{a-b}2\geq \frac{\delta l \nu}2\]

    Since $h<\delta$, $\frac{1}h>\delta$.

    \[\left|\frac{f(h)}h - l \right|\geq \frac{l \nu}2 = \epsilon. \]

\end{proof}


\section{Implementation details}
\label{sec:implementation}

Here is the complete rollout and training algorithm, taken from the Spinup implementation of \ddpg.

\begin{algorithm}[H]
    \SetAlgoLined
    \KwResult{Policy $\pi_\psi$, number of steps before success}
    $\pi_\psi, Q_\theta \gets$ Xavier uniform initializer \\
    $\text{env\_steps} \gets 0$ \\
    \For{$t\gets 1$ \KwTo $10000$}{
        $a\gets \pi_\psi(s)$ \\
        \If{$rand()<0.1$}{
            $a\gets \text{rand}(-0.1,0.1)$
        }
        Step the environment using action $a$, get a transition $(s,a,r,t,s')$ \\
        Store $(s,a,r,t,s')$ in the replay buffer\\
        $\text{env\_steps} \gets \text{env\_steps}+1$ \\
        \If{$t=1$ or $\text{env\_steps} > N$}{
            Reset the environment \\
            \For{$k\gets 1$ \KwTo $\text{env\_steps}$}{
                Sample a mini-batch of size 100 from the replay buffer \\
                Train $\pi_\psi$ and $Q_\theta$ on this replay buffer with losses \eqref{eq:ddpg_actor} and \eqref{eq:ddpg_critic}.
            }
            $\text{env\_steps} \gets 0$
        }
        \If{$t \text{ mod } 1000 = 0$}{
            \If{last 20 episodes were successes}{
                Terminate the algorithm, and return success\_after\_steps$=t$.
            }
        }
    }
\end{algorithm}

\section{Proposed solution to the deadlock problem}

\subsection{Description of \ddpgarg}
\label{sec:argmax}

  In this paper, we identified a deadlock problem and tracked its origin to the actor update described in Equation~\eqref{eq:ddpg_actor}. In Section~\ref{sec:solutions}, we proposed a new actor update mechanism in an algorithm called \ddpgarg, which we describe in more details here.

  Instead of relying on the differentiation of $Q_\theta\left(s_i,\pi_\psi\left(s_i\right)\right)$ to update $\psi$ in order to maximize $Q(s,\pi(s))$, we begin by selecting a set of $N$ potential actions $(b_j)_{0\leq j < N}$.
  Then, we compute $Q_\theta\left(s_i, b_j\right)$ for each sample $s_i$ and each potential action $b_j$, and for each sample $s_i$ we find the best potential action $c_i=b_{\argmax_j Q_\theta\left(s_i,b_j\right)}$.
  Finally, we regress $\pi_\psi(s_i)$ towards the goal $c_i$. This process is summarized in Equation~\eqref{eq:argmax}, where $Unif(A)$ stands for uniform sampling in $A$.

  \begin{equation}
  \label{eq:argmax}
  \left\{\begin{aligned}
      (b_j)_{0\leq j<N} &\sim \text{Unif}(A) \\
      c_i &= b_{\argmax_j Q_\theta\left(s_i,b_j\right)} \\
      \text{minimize } & \sum_i \left(\pi_\psi(s_i) - c_i\right)^2 \text{w.r.t. }   & \psi
  \end{aligned}\right.
  \end{equation}

\subsection{Experiments on larger benchmarks}
\label{sec:sparsification}

In order to test the relevance of using \ddpgarg on larger benchmarks, we constructed sparse reward versions of \rc and \hc.

\rc was modified by generating a step reward of 1 when the distance between the arm and the target is less than $0.06$, and 0 otherwise. The distance to the target was removed from the observations, and the target was fixed to a position of $[0.19, 0]$, instead of changing at each episode. We also removed the control penalty.

\hc was modified by generating a step reward of 2 when the x component of the speed of the cheetah is more than 2. We also removed the control penalty. Since the maximum episode duration is 1000, the maximum possible reward in this modified environment is 2000.

In both cases, the actor noise uses the default implementation of the Spinup implementation of \ddpg, which is an added uniform noise with an amplitude of $0.1$.

Running \ddpg and \ddpgarg on these environments yields the results shown in Figures~\ref{fig:rc} and \ref{fig:hc}. Experiments on \hc have been conducted using six different seeds. In \figurename~\ref{fig:hc}, the main curves are smoothed using a moving average covering 10 episodes (10k steps), and the shaded area represents the average plus or minus one standard deviation. 

On \hc, both \ddpg and \ddpgarg are able to find rewards despite its sparsity. However, \ddpgarg outperforms \ddpg in this environment. Since the only difference between these algorithms is the actor update, we conclude that even in complex environments, the actor update is the main weakness of \ddpg. We have shown that replacing it with a brute-force update improves performance dramatically, and further research aiming to improve the performance of deterministic actor-critic algorithms in environments with sparse rewards should concentrate on improving the actor update rule.

\begin{figure}
  \centering
  \includegraphics[width=0.5 \textwidth]{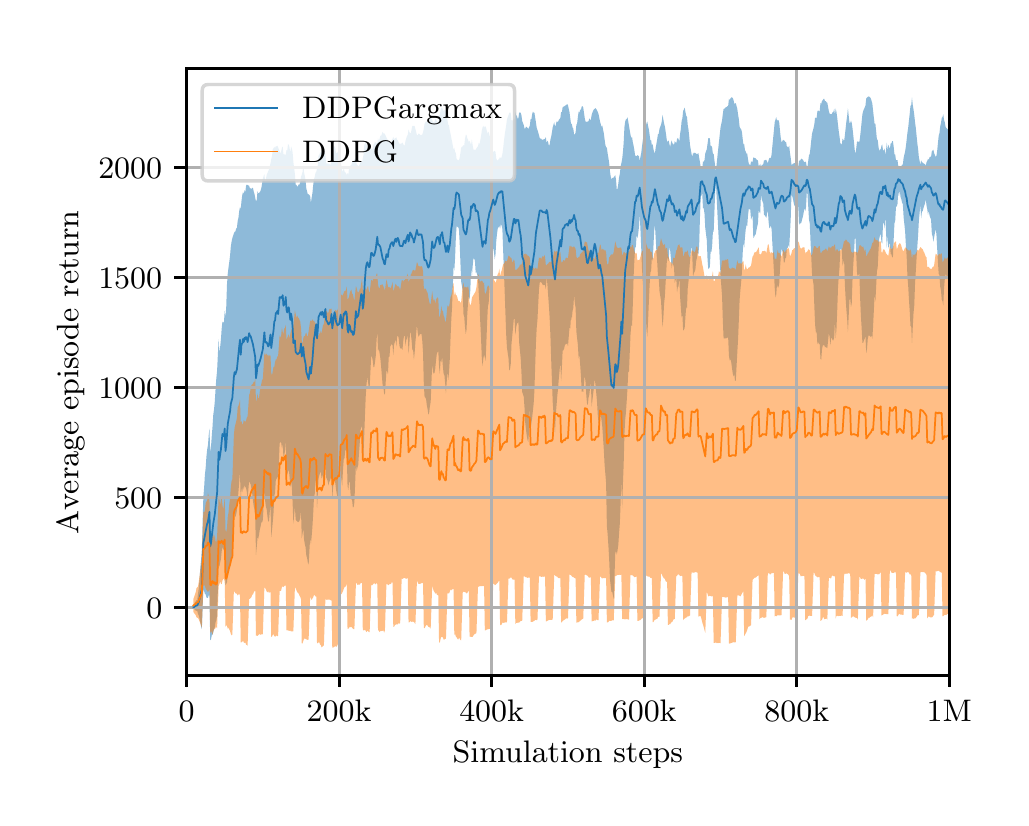}
  \caption{Performance of \ddpg and \ddpgarg on a sparse variant of \hc. To ensure exploration of the state space is not a problem, the policy is replaced with a good pre-trained policy in one episode over 20.}
  \label{fig:pistar}
\end{figure}

Figure~\ref{fig:hc} shows that \ddpg is able to find the reward without the help of any exploration except the uniform noise built in the algorithm itself. However, to prove that state-space exploration is not the issue here, we constructed a variant in which the current actor is backed up and replaced with a pre-trained good actor every 20 episodes. This variant achieves episode returns above 1950 (as a reminder, the maximum episode return is 2000). In the next episode, the backed up policy is restored. This guarantees that the replay buffer always contains all the transitions necessary to learn a good policy. We call this technique \emph{priming}.

Results of this variant are presented in Figure~\ref{fig:pistar}. Notice that \ddpg performs much better than without priming, but the performance of \ddpgarg is unchanged. However, \ddpg still fails to completely solve the environment, proving that even when state-space exploration is made trivial, \ddpg underperforms on sparse-reward environments due to its poor actor update.

\end{document}